\title{Can Meta-Interpretive Learning outperform Deep Reinforcement Learning of Evaluable Game strategies?}
\author{
C\'{e}line Hocquette$^1$
\and
Stephen H. Muggleton$^1$
\affiliations
$^1$Department of Computing, Imperial College London, London, UK\\
\emails
\{celine.hocquette16, s.muggleton\}@imperial.ac.uk
}
\begin{document}
\maketitle
\begin{abstract}
World-class human players have been outperformed in a number of complex two person games (Go, Chess, Checkers) by Deep Reinforcement Learning systems. However,
owing to tractability considerations minimax regret of a learning system cannot be evaluated in such games. In this paper we consider simple games (Noughts-and-Crosses and Hexapawn) in which minimax regret can be efficiently evaluated. We use these games to compare Cumulative Minimax Regret for variants of both standard and deep reinforcement learning against two variants of a new Meta-Interpretive Learning system called \textit{MIGO}. In our experiments all tested variants of both normal and deep reinforcement learning have worse performance (higher cumulative minimax regret) than both variants of \textit{MIGO} on Noughts-and-Crosses and Hexapawn. Additionally, \textit{MIGO}'s learned rules are relatively easy to comprehend, and are demonstrated to achieve significant transfer learning in both directions between Noughts-and-Crosses and Hexapawn.
\end{abstract}
\section{Introduction}
Deep Reinforcement Learning systems have been demonstrated capable
of mastering two-player games such as Go \cite{GO}, outperforming
the strongest human players. However, these systems 1) generally require
a very large training set to converge toward a good strategy, 2) are
not easily interpretable as they provide limited explanation about
how decisions are made and 3) do not provide transferability of the learned
strategies to other games.

We demonstrate in this work how machine learning strategies as logic
programs can overcome these limitations. For example, an applicable
strategy for playing Noughts-and-Crosses is to create double attacks
when possible. An example of this is shown in Figure \ref{fig:intro}.
Player O executes a move from board A to board B which creates the two threats represented in green, and results in a forced win for O.
The rules in Figure \ref{fig:intro} describe such a strategy.
A,B and C are variables representing state descriptions which encode
the board together with the active player. The rules state that
a move by the active player from A to B  is a winning
move if the opponent cannot immediately win and the opponent cannot
make a move to prevent an immediate win by the active player. These rules
provide an understandable strategy for winning in two moves. Moreover, these
rules are transferable to more complex games as they are generally true for
describing double attacks.
\begin{figure}[t]
\centering
\captionsetup{justification=centering}
\subfloat{{\includegraphics[width=0.18\textwidth]{./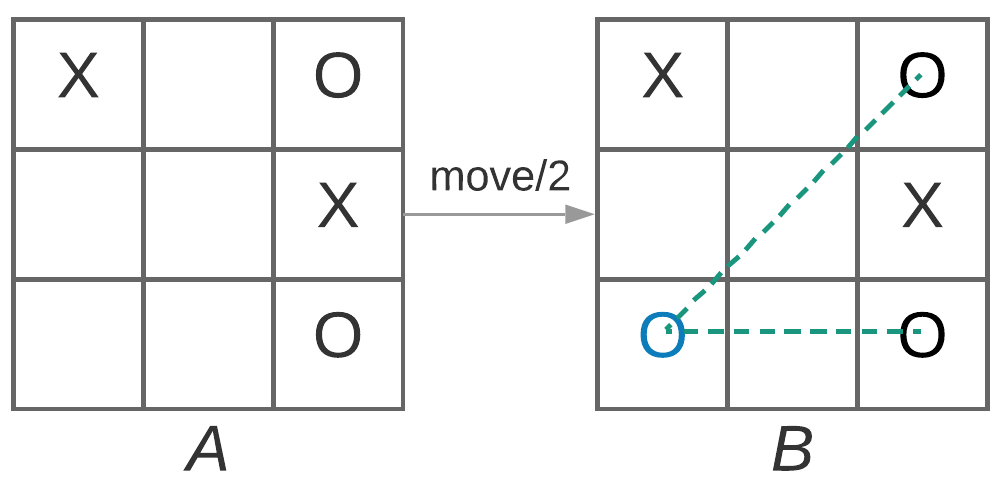}}} \hfill
\subfloat{
\begin{adjustbox}{max width=0.35\textwidth}
\begin{tabular}{|c|}
\hline
\texttt{win\_2(A,B):-win\_2\_1\_1(A,B),not(win\_2\_1\_1(B,C)).}\\
\texttt{win\_2\_1\_1(A,B):-move(A,B),not(win\_1(B,C)).}\\
\texttt{win\_1(A,B):- move(A,B),won(B).} \\
\hline
\end{tabular}
\end{adjustbox}}
    \caption{Noughts and Crosses: example of optimal move for O from board A to board B. For all moves of X from board B, O can win in one move. This statement can be expressed with the logic program presented: O makes a move such that X cannot immediately win nor make a move that blocks O.}
    \label{fig:intro}
        \vspace{-3mm}
\end{figure}
We introduce in this article a new logical system called \textit{MIGO}
(\textit{Meta-Interpretive Game Ordinator})\footnote{From the children's
game-playing phrase {\em My go!} and the literal translation
into English of the French word \textit{Ordinateur} which means computer.}
designed for learning two player game optimal strategies of the form
presented in Figure \ref{fig:intro}. It benefits from a strong inductive
bias which provides the capability to learn efficiently from a few
examples of games played. Learned hypotheses are provided in a symbolic
form, which allows their interpretation. Moreover, learned strategies
are generally true for all two-player games, which provides straightforward
transferability to more complex games.

\textit{MIGO} uses Meta-Interpretive Learning (MIL), a recently developed
Inductive Logic Programming (ILP) framework that supports predicate
invention, the learning of recursive programs \cite{[MIL1],[MIL]}
and Abstraction \cite{[Abstraction]}. \textit{MIGO}
additionally supports Dependent Learning \cite{[Onsshot]}. The learning
operates in a staged fashion: simple definition are first learned
and added to the background knowledge \cite{[Onsshot]}, allowing them to be reused during further learning tasks, and thus to build up more and more complex definitions. For instance, \textit{MIGO} would
first learn a simple definition of \textit{win\_1/1} for winning in
one move. Next, a predicate \textit{win\_2/1} describing the action
of winning in two moves can be built from \textit{win\_1/1} as shown
in Figure \ref{fig:intro}.

To evaluate performance we consider two evaluable games (Noughts and Crosses and Hexapawn). Our results demonstrate that substantially lower Cumulative Minimax Regret can be achieved by \textit{MIGO} compared to variants of reinforcement learning.

Our contributions are the introduction of a system for learning optimal two-player-game strategies (Section 3) and the description of its implementation (Section 4).  We demonstrate experimentally it converges faster than reinforcement learning systems and that learned strategies are transferable to more complex games (Section 5).
\section{Related Work}
\subsection{Learning game strategies}
Various early approaches to game strategies \cite{shapnib,Quinlan}
used the decision tree learner ID3 to classify minimax depth-of-win
for positions in chess end games.
These approaches used a set of carefully
selected board attributes as features. Conversely,
\textit{MIGO} is provided with a set of three relational primitives
({\em move/2}, {\em won/1}, {\em drawn/1}) representing the minimal
information a human would expect to know before playing a two
person game.
An ILP approach learned optimal chess endgame strategies at depth 0 or 1 \cite{Chess}. Examples are board positions taken from a database. Conversely, MIGO learns from game play.
\subsection{Reinforcement Learning}
Reinforcement Learning considers the task of identifying an optimal
agent policy to maximise the cumulative reward perceived by an agent. MENACE (Matchbox Educable Noughts And Crosses Engine) \cite{[MENACE]}
was the world's earliest reinforcement learning system and
was specifically designed to learn to play Noughts-and-Crosses.
An early manual version of MENACE used a stack of matchboxes,
one for each accessible board position.
Each box contained coloured beads representing possible moves. Moves
were selected by randomly drawing a bead from the current box.
After having completed a game, MENACE's punishment
or reward consisted of subtracting or adding beads according to
the outcome of the game.  This modified the probability
of the selected move being played in the position \cite{Brooks}. HER (Hexapawn
Educational Robot) \cite{Hexapawn} is a similar system for the game
of Hexapawn.

More generally, Q-learning \cite{QLearning} addresses the problem of
learning an optimal policy from delayed rewards and by trial and error.
The learned policy takes the form of Q-values for each actions available
from a state. A guarantee of asymptotic convergence to
optimal behaviour has been proved \cite{QLearning2}.

Deep Q-learning \cite{DQL} is an extension that uses a deep convolutional
neural network to approximate the different Q-values for each actions
given a particular state.
It provides better scalability which has been demonstrated through
a diverse range of tasks from the Atari 2600 games. However, this
framework generally requires the execution of many games to converge. Moreover,
the learned strategy is implicitly encoded into the Q-value parameters,
which do not provide interpretability. In \cite{Murray}, a hybrid
neural-symbolic system is described which address some of these drawbacks. A neural back-end transforms images into a symbolic representation and generates features. A symbolic front end performs action selection. Conversely, \textit{MIGO} is based upon a purely symbolic approach and the number of primitives considered is reduced.
\subsection{Relational Reinforcement Learning}
Relational reinforcement learning \cite{RRL} is a reinforcement learning
framework where states, actions and policies are represented relationally.
It benefits from background knowledge and declarative bias. It learns
a Q-function using a relational regression tree algorithm. Conversely,
the learning framework \textit{MIGO} is not based upon the identification
of Q-values but aims at deriving hypotheses describing an optimal
strategy. Relational reinforcement learning also provides the ability
to carry over the policies learned in simple domains to more complex
situations. However, most systems aim at learning single agent policy
and, in contrast to \textit{MIGO}, are not designed to learn
to play two person games.
\section{Theoretical Framework}
\subsection{Credit Assignment}
One can evaluate the success of a game by looking at its outcome.
However, a problem arises for assigning the reward to the various
moves performed. Reinforcement learning systems usually tackle this
so-called Credit Assignment Problem by adjusting parameter values
associated with the moves responsible for the reward observed. We
introduce theorems for identifying moves that are necessarily positive
examples for the task of winning and drawing.

We assume the learner $P_{1}$ plays against an opponent $P_{2}$ that follows an optimal strategy and that the game starts from a randomly chosen initial board $B$. We consider the following ordering over the different outcomes for $P_{1}$ and demonstrate the lemma below:
\begin{align*}
won \succ drawn \succ loss
\end{align*}
\begin{lemma} \label{lemma}
The expected outcome of $P_{1}$ can only decrease during
a game.
\end{lemma}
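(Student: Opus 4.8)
The plan is to interpret the ``expected outcome'' of $P_1$ at a position as the \emph{minimax value} of that position, namely the outcome in $\{won, drawn, loss\}$ that $P_1$ can still secure when playing optimally from there against the optimal opponent $P_2$, and then to show that this value is non-increasing along every play. First I would define a value function $V(s)$ on the (finite) game tree by backward induction: if $s$ is terminal, $V(s)$ is its outcome for $P_1$; if it is $P_1$'s turn at $s$, then $V(s)=\max_{s'} V(s')$ over the successors $s'$ of $s$; and if it is $P_2$'s turn, then $V(s)=\min_{s'} V(s')$, where $\max$ and $\min$ are taken with respect to the ordering $won \succ drawn \succ loss$. Since the games are finite this induction terminates and $V$ is well defined.

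With $V$ in hand, I would consider any actual play $s_0, s_1, \ldots, s_n$ in which $P_2$ follows its optimal strategy, and compare $V(s_i)$ with $V(s_{i+1})$ at each move, splitting on whose turn it is at $s_i$. When $P_1$ moves, $s_{i+1}$ is one particular successor of the max-node $s_i$, so $V(s_{i+1}) \preceq \max_{s'} V(s') = V(s_i)$, and a suboptimal choice by the learner is exactly what can make this inequality strict. When $P_2$ moves, optimality of $P_2$ means it selects a minimizing successor, so $V(s_{i+1}) = \min_{s'} V(s') = V(s_i)$ and the value is preserved. Chaining these relations along the play gives $V(s_0) \succeq V(s_1) \succeq \cdots \succeq V(s_n)$, which is the claimed monotonic decrease; since the initial board $B$ is arbitrary, the statement also holds in expectation over its random choice.

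The main obstacle I anticipate is pinning down the correct recursion at the two kinds of internal node and verifying that $P_2$'s optimality yields \emph{equality} rather than merely an inequality at its nodes: this is the only place the optimal-opponent assumption is used, and it is precisely what prevents the value from recovering after $P_1$ errs. A secondary point to handle with care is the comparison of elements of the qualitative scale $\{won, drawn, loss\}$; since these are totally ordered by $\succ$, both $\max$ and $\min$ are well defined and every inequality above lives in this three-element chain, so no numeric encoding of outcomes is required.
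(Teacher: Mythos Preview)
Your argument is correct and follows essentially the same route as the paper: both rely on the minimax value being non-increasing because $P_2$'s optimal moves preserve (or at worst lower) $P_1$'s value while $P_1$'s own moves cannot exceed the maximum defining the value at its nodes. Your version is simply a more explicit and rigorous unfolding of the paper's two-sentence proof, and your observation that $P_2$'s optimality actually yields equality (not merely non-increase) is a sharper statement than the paper makes.
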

\begin{proof}
$P_{2}$ plays optimally and therefore any move of $P_{2}$ maintains or lowers the expected outcome. Therefore $P_{1}$ cannot increase its outcome.
\end{proof}
We demonstrate the Theorems below given these assumptions and Lemma \ref{lemma}:
\begin{theorem} \label{theorem1}
If the outcome is won for $P_{1}$, then every move
of $P_{1}$ is a positive example for the task of winning.
\end{theorem}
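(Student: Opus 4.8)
The plan is to combine the monotonicity established in Lemma~\ref{lemma} with the fact that $won$ is the maximal element of the ordering $won \succ drawn \succ loss$. First I would fix a completed game whose actual outcome for $P_{1}$ is $won$, and observe that at the terminal board the expected outcome of $P_{1}$ coincides with the actual outcome, and therefore equals $won$.

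Next I would argue that the expected outcome in fact equals $won$ at \emph{every} board visited during the game. By Lemma~\ref{lemma} the expected outcome is non-increasing as play proceeds, so no later board can have a strictly larger expected outcome than an earlier one. Since the expected outcome attains the maximal value $won$ at the terminal board, it could never have dropped to $drawn$ or $loss$ at any intermediate board: such a drop could never be recovered and would contradict the terminal value being $won$. Hence the expected outcome is constantly $won$ from the initial board $B$ through to the end.

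Finally I would connect this to the notion of a positive example. A move of $P_{1}$ is a positive example for the task of winning precisely when it leads from a board with expected outcome $won$ to a board that again has expected outcome $won$, i.e.\ when it preserves the forced win. Because every board on the play path has expected outcome $won$, each move made by $P_{1}$ carries one such board to another, and is therefore a positive example for winning, which is the claim.

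The main obstacle I anticipate is making the link between the \emph{actual} terminal outcome and the \emph{expected} outcome precise, together with pinning down the operational meaning of ``positive example for winning'' so that it is exactly the condition of preserving a $won$ expected outcome; once these are fixed, the monotonicity-plus-maximality step is immediate. A secondary point to verify is that Lemma~\ref{lemma} genuinely applies across the whole alternating sequence of moves by $P_{1}$ and $P_{2}$, rather than only across $P_{2}$'s moves, so that the non-increasing property holds at every ply of the game.
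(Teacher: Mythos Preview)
Your proposal is correct and is essentially the same argument as the paper's, relying on Lemma~\ref{lemma} together with the maximality of $won$ in the ordering. The only cosmetic difference is that the paper proceeds by contradiction (assume some $P_{1}$ move drops the expected outcome below $won$; Lemma~\ref{lemma} then forces the final outcome strictly below $won$, contradicting the observed win), whereas you anchor at the terminal board and argue directly that every visited board has expected outcome $won$; the logical content is identical, and your caveats about the meaning of ``positive example'' and the scope of Lemma~\ref{lemma} across both players' plies match the paper's implicit reading.
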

\begin{proof}
Suppose there exists a move of $P_{1}$ from the board $B_{1}$ to the board $B_{2}$ within the game sequence that is a negative example for the task of winning. Then the expected outcome of $B_{1}$ is won
and the expected outcome of $B_{2}$ is strictly lower with respect
to the order $\succ$. Then, following Lemma \ref{lemma} the outcome of the game is strictly lower than won, which leads to contradiction with the outcome observed.
\end{proof}
\begin{theorem} \label{theorem2}
We additionally assume an accurate strategy $S_{W}$ for winning has
been learned by the learner $P_{1}$. If the outcome of the game is drawn and if the execution of $S_{W}$ from B fails, then any move played by $P_{1}$ or $P_{2}$ is a positive example for the task of drawing.
\end{theorem}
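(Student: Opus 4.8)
The plan is to track the game-theoretic value of every board actually played — this is the \emph{expected outcome} of Lemma~\ref{lemma} — and to show that the two hypotheses of the theorem squeeze that value to be exactly \emph{drawn} at each board. I would write the played game as $B = B_0, B_1, \dots, B_n$, where $B_{i+1}$ is obtained from $B_i$ by a single move of $P_1$ or $P_2$, and let $v_i$ denote the expected outcome of $B_i$ for $P_1$. Since $B_n$ is terminal, its value is just the observed result, so $v_n = drawn$.

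First I would use the accuracy of $S_W$ to bound $v_0$ from above. An accurate winning strategy succeeds from a board precisely when that board has expected outcome \emph{won}; the assumption that the execution of $S_W$ from $B$ fails therefore gives $v_0 \preceq drawn$, i.e.\ $B$ is not a winning position for $P_1$. This step is exactly what rules out the spurious case in which $P_1$ starts from a won position and later throws the win away: in that case the early moves would be positive examples for \emph{winning} (as in Theorem~\ref{theorem1}) rather than for drawing, so excluding it is essential.

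Next I would invoke Lemma~\ref{lemma}, which states that the expected outcome is non-increasing along the game: $v_0 \succeq v_1 \succeq \cdots \succeq v_n$. Combining this with $v_n = drawn$ yields $v_0 \succeq drawn$, and together with the upper bound $v_0 \preceq drawn$ from the previous step this forces $v_0 = drawn$. The chain is now pinned between equal endpoints, $drawn = v_0 \succeq v_i \succeq v_n = drawn$, so $v_i = drawn$ for every $i$.

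Finally, every move carries $B_i$ to $B_{i+1}$ with $v_i = v_{i+1} = drawn$, so each move — whether made by $P_1$ or by $P_2$ — preserves the drawn value and is therefore a positive example for the task of drawing. Unlike Theorem~\ref{theorem1}, both players qualify here, because from a board of value \emph{drawn} neither side can do better than draw, so each player's optimal (value-preserving) move is draw-maintaining. I expect the main obstacle to be the middle squeeze argument: one has to combine the upper bound coming from $S_W$, the lower bound coming from the observed outcome, and the monotonicity of Lemma~\ref{lemma} to conclude the value is \emph{constant}, and one must fix the precise meaning of ``positive example for drawing'' (a move from a board of value \emph{drawn} to another board of value \emph{drawn}) so that the conclusion genuinely follows from $v_i \equiv drawn$.
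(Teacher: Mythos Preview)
Your proposal is correct and follows essentially the same route as the paper: rule out $v_0 = won$ for $P_1$ via the failed execution of $S_W$, rule out $v_0 = loss$ for $P_1$ via the optimality of $P_2$ together with the observed drawn outcome, conclude $v_0 = drawn$, and then use the monotonicity of Lemma~\ref{lemma} to force $v_i = drawn$ throughout. Your write-up is in fact more explicit than the paper's about the squeeze $drawn = v_0 \succeq v_i \succeq v_n = drawn$, which the paper leaves implicit in its final sentence.
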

\begin{proof}
The initial position does not have an expected outcome of won for $P_{2}$ otherwise the outcome would be won for $P_{2}$ since it plays optimally. The initial position is not an expected outcome of win for $P_{1}$ by assumption. Therefore, the expected outcome of B is drawn. It follows
from Lemma \ref{lemma} that every position reached during the game has an expected
outcome of drawn and that every move of both players is a positive
example for the task of drawing.
\end{proof}
Theorems \ref{theorem1} and \ref{theorem2} demonstrate that for an outcome of win for
$P_{1}$ or drawn and because the opponent plays optimally, the expected
outcome is necessarily maintained as won or drawn respectively. This
cannot be further generalised to $P_{2}$'s moves: an outcome of won
for $P_{2}$ might be the consequence of a mistake of $P_{1}$ who does
not play optimally.

One should also highlight the fact that Theorems \ref{theorem1} and \ref{theorem2} do
not provide any negative examples for \textit{win/2} or \textit{draw/2}, as these theorems do not help to evaluate moves for which the expected outcome
decreases. Practically, the learning system considered learns from
positive examples only.
\subsection{Game evaluation}
Given Theorems \ref{theorem1} and \ref{theorem2}, the opponent chosen is an optimal player
following the minimax algorithm. Both for Noughts-and-Crosses and
Hexapawn, and more generally for most fair two player games, the
opponent can always ensure a draw from the initial board, which leaves no opportunities for the learner to win. To ensure possibilities of winning, we start the game from a board randomly
sampled from the set of one move ahead accessible boards; this set provides different expected outcomes for the games considered. Then, the actual outcome
relies on both the initial board and the sequence of moves performed.
We define the minimax regret as follows:
\begin{definition}[Definition 3.4: ]
The \textit{minimax regret} of a game is the difference between the
minimax expected outcome of the initial board and the actual outcome
of the game.
\end{definition}
Practically, the minimax expected outcome of a board can be evaluated
from a minimax database computed beforehand. Definition 3.4 provides
an absolute measure to evaluate the performances of a learning algorithm
as it does not rely on the choice of initial board. Thereafter, we
evaluate the cumulative minimax regret to compare different learning
systems.
\subsection{Meta-Interpretive Learning (MIL)}
The system \textit{MIGO} introduced in this work is a MIL system. MIL is a form of ILP~\cite{[MIL],[MIL2]}.
The learner is given a set
of examples $E$ and background knowledge $B$ composed of a set of
Prolog definitions $B_{p}$ and metarules $M$ such that $B = B_{p} \cup M$. The aim is to generate a hypothesis $H$ such that $B,H \models E$.
The proof is based upon an adapted Prolog meta-interpreter. It first
attempts to prove the examples considered deductively. Failing this,
 it unifies the head of a metarule with the goal, and saves the resulting
meta-substitution. The body and then the other examples are similarly
proved. The meta-substitutions recovered for each successful proofs
are saved and can be used in further proofs by substituting them
into their corresponding metarules. Key features of MIL are that it supports predicate invention, the
learning of recursive programs and Abstraction \cite{[Abstraction]}.
In the following, we use the MIL system \textit{Metagol} \cite{metagol}.
\subsection{MIGO algorithm}
We present within this section details of the MIGO algorithm.
\paragraph{Learning from positive examples}
Theorems \ref{theorem1} and \ref{theorem2} provide a way of assigning
positive labels
to moves. Therefore, the learning is based upon positive
examples only. This is possible because of Metagol's strong language
bias and ability to generalise from a few examples only. However,
one pitfall is the risk of over-generalisation due to the absence
of negative examples.
\paragraph{Dependent Learning}
For successive values of $k$ a series of inter-related definitions are learned for predicates $\mbox{win}\_k(A,B)$ and $\mbox{draw}\_k(A,B)$. These predicates define maintenance of minimax win and draw in $k$-ply when moving from position $A$ to $B$. The learning algorithm is presented as Algorithm \ref{alg:learningprotocol}, each action 'learn' represents a call to Metagol.
This approach is related to
Dependent Learning \cite{[Onsshot]}.
The idea is to first learn low-level predicates.
They are derived from single examples with limited complexity.
The definitions are added into the background knowledge such that
they can be used in further definitions. The process iterates
until no further predicates can be learned.

\begin{algorithm}[tb]
\caption{MIGO Algorithm}
\label{alg:learningprotocol}
\footnotesize
\textbf{Input}: Positive examples for win$\_$k and draw$\_$k\\
\textbf{Output}: Strategy for win$\_$k and draw$\_$k
\begin{algorithmic}[1]
\FOR{k in [1,Depth]}
\FOR{each example of win$\_$k/2}
\STATE one shot learn a rule and add it to the BK
\ENDFOR
\STATE Learn win$\_$k/2 and add it to the BK
\ENDFOR
\FOR{k in [1,Depth]}
\FOR{each example of draw$\_$k/2}
\STATE one shot learn a rule and add it to the BK
\ENDFOR
\STATE Learn draw$\_$k/2 and add it to the BK
\ENDFOR
\end{algorithmic}
\end{algorithm}
\normalsize
\paragraph{Mixed Learning and Separated Learning}
Theorem \ref{theorem2} assigns positive labels to \textit{draw/2}
examples assuming a winning strategy $S_{W}$ has already been learned.
In practice, we distinguish two variants of \textit{MIGO}:
    \vspace{-1mm}
\begin{enumerate}[leftmargin=*]
\item Separated Learning: \textit{win/2} and \textit{draw/2} are learned
in two stages. \textit{Win/2} is first learned. 
When a strategy for \textit{win/2} is stable for a given
number of iterations the learner starts learning \textit{draw/2}.
    \vspace{-1mm}
\item Mixed Learning: \textit{win/2} and \textit{draw/2} are learned
simultaneously. Examples of \textit{draw/2} are first evaluated with
the current strategy for \textit{win/2}. If this latter is updated,
 examples of \textit{draw/2} are re-tested against the new version
of \textit{win/2}.
\end{enumerate}
\section{Implementation}
\subsection{Representation}
A board $B$ is encoded as a 9-vector of marks from the set $\{O,X,Empty\}$.
States $s(B,M)$ are atoms that represent the current
board $B$ and the active player $M$.
\subsection{Primitives and Metarules}
The language belongs to the language class $H_{2}^{2}$, which is the subset of Datalog logic programs with predicates of arity at most 2 and at most 2 literals in the body of each clause.
Learned programs are formed of dyadic predicates, representing actions,
and monadic predicates, representing fluents. The background knowledge
contains a general move generator \textit{move/2}, which is an action
that modifies a state $s(B,M)$ by executing a move on board $B$ and
updating the active player $M$. \textit{Move/2} only holds for valid
moves; in other words, the learner already knows the rules of the
game. The background knowledge also contains two fluents: a won classifier
\textit{won/1} and a drawn classifier \textit{drawn/1}. They hold
when a board is respectively won or drawn.

We consider the metarules \textit{postcond} and \textit{negation}
described in Table \ref{tab:metarules}. The metarule \textit{negation}
expresses the logical negation for primitive predicates, and is implemented
as negation as failure. This form of Negation does not introduce
invented predicates in Metagol.
\begin{table}
\centering
\begin{adjustbox}{width=0.35\textwidth}
\begin{tabular}{cr}  
\toprule
Name  & Metarule\\
\midrule
\emph{postcond} & $P(A,B) \leftarrow Q(A,B),R(B).$\\
\emph{negation} & $P(A,B) \leftarrow Q(A,B), not(R(B,C)).$\\
\bottomrule
\end{tabular}
\end{adjustbox}
\caption{Metarules considered: the letters P,Q and R denote existentially quantified higher order variables. The letters A, B and C denote universally quantified first-order variables.}
    \vspace{-4mm}
\label{tab:metarules}
\end{table}
\subsection{Execution of the strategy}
For each rule learned for \textit{win$\_$i} and \textit{draw$\_$i}
a clause of the form below is added to the background knowledge:
\small    \vspace{-1mm}
\begin{verbatim}
win(A,B) :- win_i(A,B).
draw(A,B) :- draw_i(A,B).
\end{verbatim}
\normalsize
When executing a strategy described with a hypothesis $H$, the move
performed is the first one consistent with $H$. Practically, it first
attempts to prove \textit{win$\_{i}$/2} for increasing values for $i$.
Failing that, it attempts to prove \textit{draw$\_{i}$/2} for increasing
values for $i$. If these proofs fail, a move is selected at random
among the possible moves.\\
The opponent plays a deterministic
minimax strategy that yields the best outcome in the minimum number
of moves. 
\subsection{Learning a strategy}
At the end of a game, the outcome is observed and the sequence of
visited boards is divided into moves. The depth of each board is measured as the number of full moves until the end of the game in the observed sequence. Moves are added to the set of
positive examples for win$\_k$/2 or draw$\_k$/2 if they satisfy Theorems \ref{theorem1} or \ref{theorem2}.
Strategies are relearned from scratch after each game using the
MIGO algorithm presented above. One additional constraint is
added such that \textit{draw/2} cannot be learned before \textit{win/2}
since this would cause the learner to always draw and never win.
\section{Experiments}
\subsection{Experimental Hypothesis}
This section describes experiments which evaluate the performance of \textit{MIGO} for the task of learning optimal two player game strategies\footnote{Code for these experiments available at\\ \url{https://github.com/migo19/migo.git} }. We use the games of Noughts-and-Crosses and a variant of the game of Hexapawn \cite{Hexapawn}. \textit{MIGO} is compared against the reinforcement learning systems MENACE / HER, Q-learning and Deep Q-learning. Accordingly, we investigate the following null hypotheses:
\medbreak \noindent
\textit{\textbf{Null Hypothesis 1: }} \textit{\textit{MIGO} cannot converge faster than MENACE / HER, Q-learning and Deep Q-learning for learning optimal two-player game strategies.}
\smallbreak
We additionally test the ability of \textit{MIGO} to transfer learned strategies to more complex games, and thus verify the following null hypothesis:
\medbreak  \noindent
\textit{\textbf{Null Hypothesis 2: }} \textit{\textit{MIGO} cannot transfer the knowledge learned during a previous task to a more complex game.}
\subsection{Convergence}
\subsubsection{Materials and Methods}
\begin{figure}
\centering
\captionsetup{justification=centering}
\subfloat{\includegraphics[width=0.06\textwidth]{./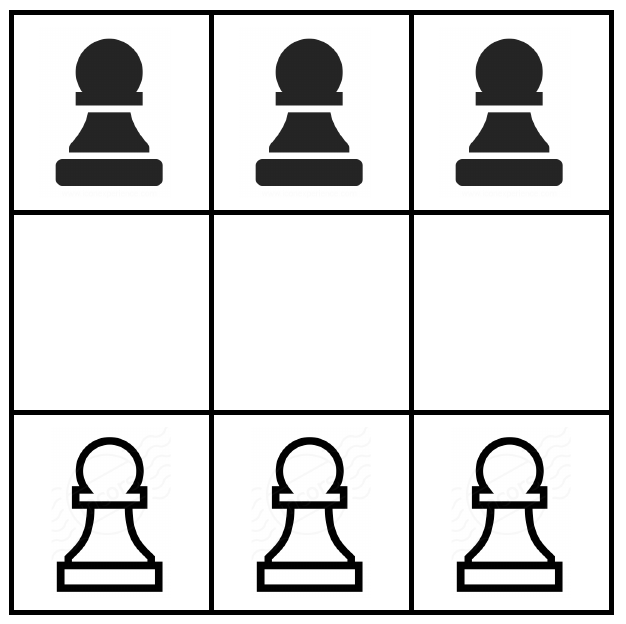}} \quad
\subfloat{\includegraphics[width=0.07\textwidth]{./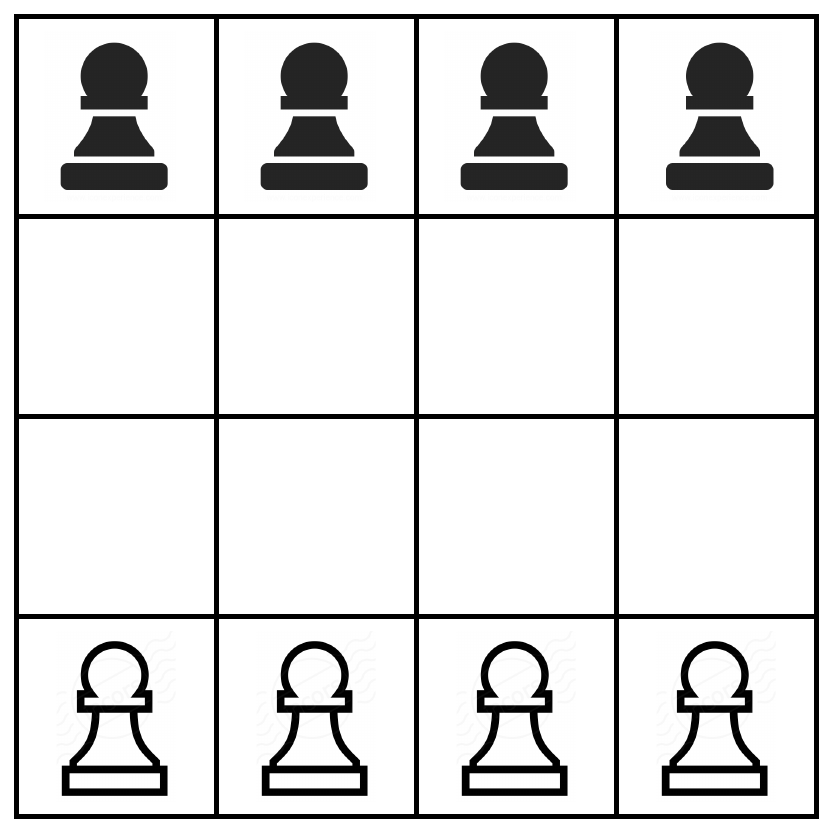}}
    \caption{Initial boards for Hexapawn$_{3}$ and Hexapawn$_{4}$}
   \label{fig:hexapawn}
    \vspace{-3mm}
\end{figure}
\begin{table}
\centering
\begin{adjustbox}{width=0.4\textwidth}
\begin{tabular}{r|r|r|r}  
\toprule
& OX & Hexapawn$_{3}$ & Hexapawn$_{4}$ \\
\midrule
MIGO mixed learning & 1.5$.10^{-1}$ & 3.0$.10^{-3}$& 3.9 \\
MIGO separated learning & 8.9$.10^{-2}$ & 2.8$.10^{-3}$ & 3.8\\
MENACE / HER & 1.5$.10^{-3}$ & 2.7$.10^{-4}$& /\\
Q-Learning & 2.3$.10^{-1}$  & 1.9 $.10^{-3}$ & 2.7 $.10^{-1}$\\
Deep Q-Learning & 2.4$.10^{-1}$  & 1.7$.10^{-2}$ & 2.1 $.10^{-1}$\\
\bottomrule
\end{tabular}
\end{adjustbox}
\captionsetup{justification=centering}
\caption{Average CPU time (seconds) of one iteration}
   \label{fig:time}
      \vspace{-3mm}
\end{table}
\paragraph{Common}
We provide \textit{MIGO}, Menace / HER and Q-learning with the same set of initial boards randomly sampled from the set of one-full-move-ahead positions - positions that result from one move of each player. The systems studied play games starting from these initial boards, and they face the same deterministic minimax player. Therefore, the only variable in the experiments is the learning system. It is assumed the learner always starts the game. The performance is evaluated in terms of cumulative minimax regret.\\
We follow an implementation of Tabular Q-learning available from \cite{Qlearningcode} and used the parameter values which were provided for the Q-learning algorithm: the exploration rate is set to 0; the initial q-values are $1$; the discount factor is $\gamma = 0.9$ and the learning rate $\alpha = 0.3$.\\
Similarly, we follow an implementation of Deep Q-learning available from \cite{DeepQlearningcode}. The provided parameters were used: the discount factor is set to $0.8$; the regularization strength to $0.01$; the target network update rate to $0.01$; the initial and final exploration rate are $0.6$ and $0.1$ respectively.\\
The results presented here have been averaged oven 40 runs for Hexapawn$_{3}$ and 20 for Noughts and Crosses. Average running times are presented in Figure \ref{fig:time}.
\paragraph{Noughts-and-Crosses}
The set of initial boards comprises 12 boards taking into account rotations and symmetries of the board. Among them 7 are expected win, and 5 are expected draw. Therefore the worst case regret of a random player is 1.58. The counter for starting learning \textit{draw/2} is set to 10.
\paragraph{Hexapawn}
Hexapawn's initial board is represented in Figure \ref{fig:hexapawn}. The goal of each player is to advance one of their pawns to the opposite end. Pawns can move one square forward if the next square is empty or capture another pawn one square diagonally ahead of it \cite{Hexapawn}. Rules have been modified: the game is said to be drawn when the current player has no legal move.Thereafter, we refer to Hexapawn$_{3}$ and Hexapawn$_{4}$ for the game of Hexapawn in dimensions 3 by 3 and 4 by 4 respectively. The set of initial boards comprises 5 boards taking into account the vertical symmetry. Among them, 3 are expected draw and 2 are expected win. Therefore the average worst case regret is 1.4. As the dimensions are smaller for Hexapawn$_{3}$ than for Noughts and Crosses, the counter for starting learning \textit{draw/2} is set to 5.
\subsubsection{Results}
\begin{figure}
\centering
\captionsetup{justification=centering}
\subfloat[Noughts-and-Crosses]{\includegraphics[width=0.41\textwidth]{./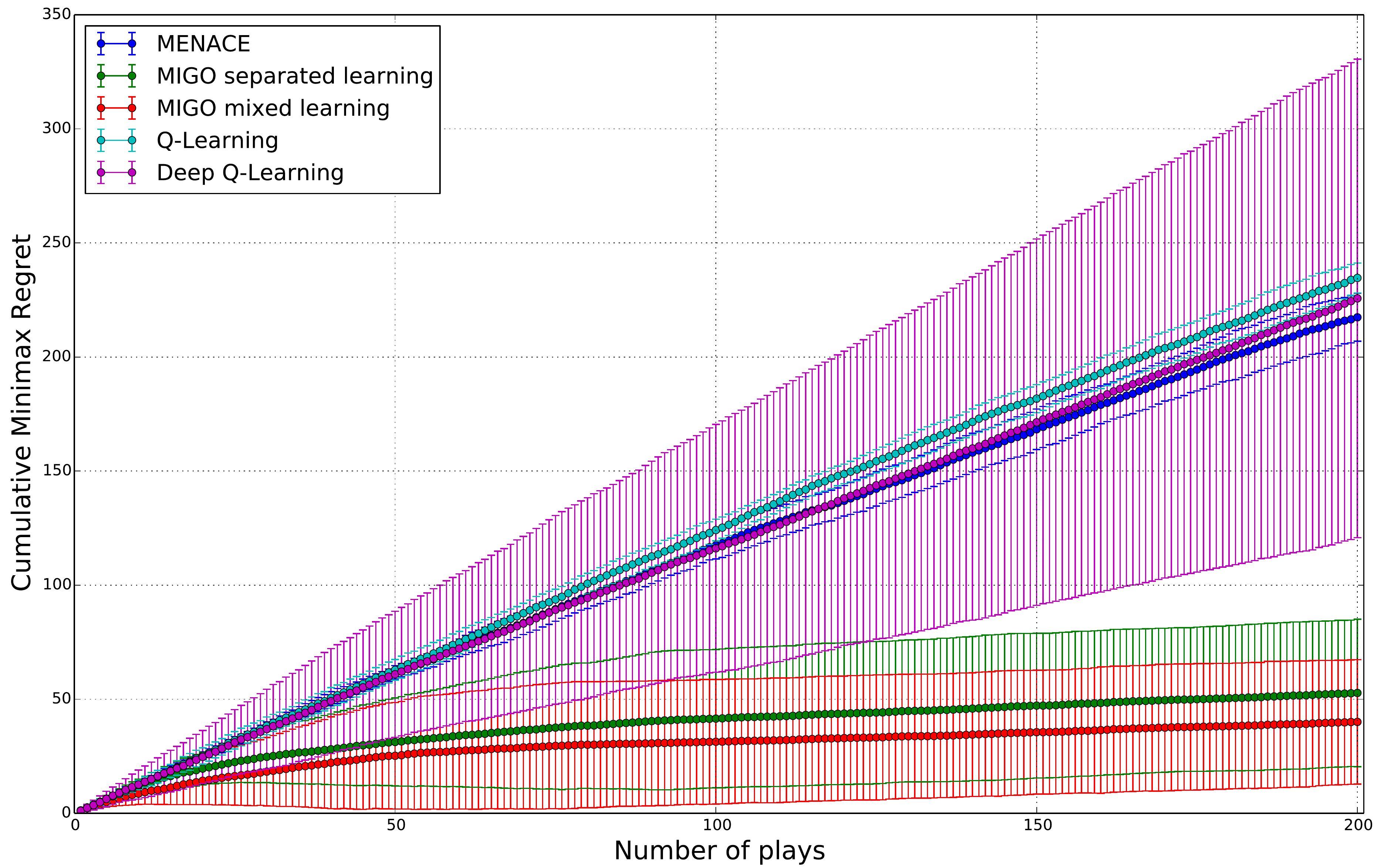}} \\
    \vspace{-3mm}
\subfloat[Hexapawn$_{3}$]{{\includegraphics[width=0.41\textwidth]{./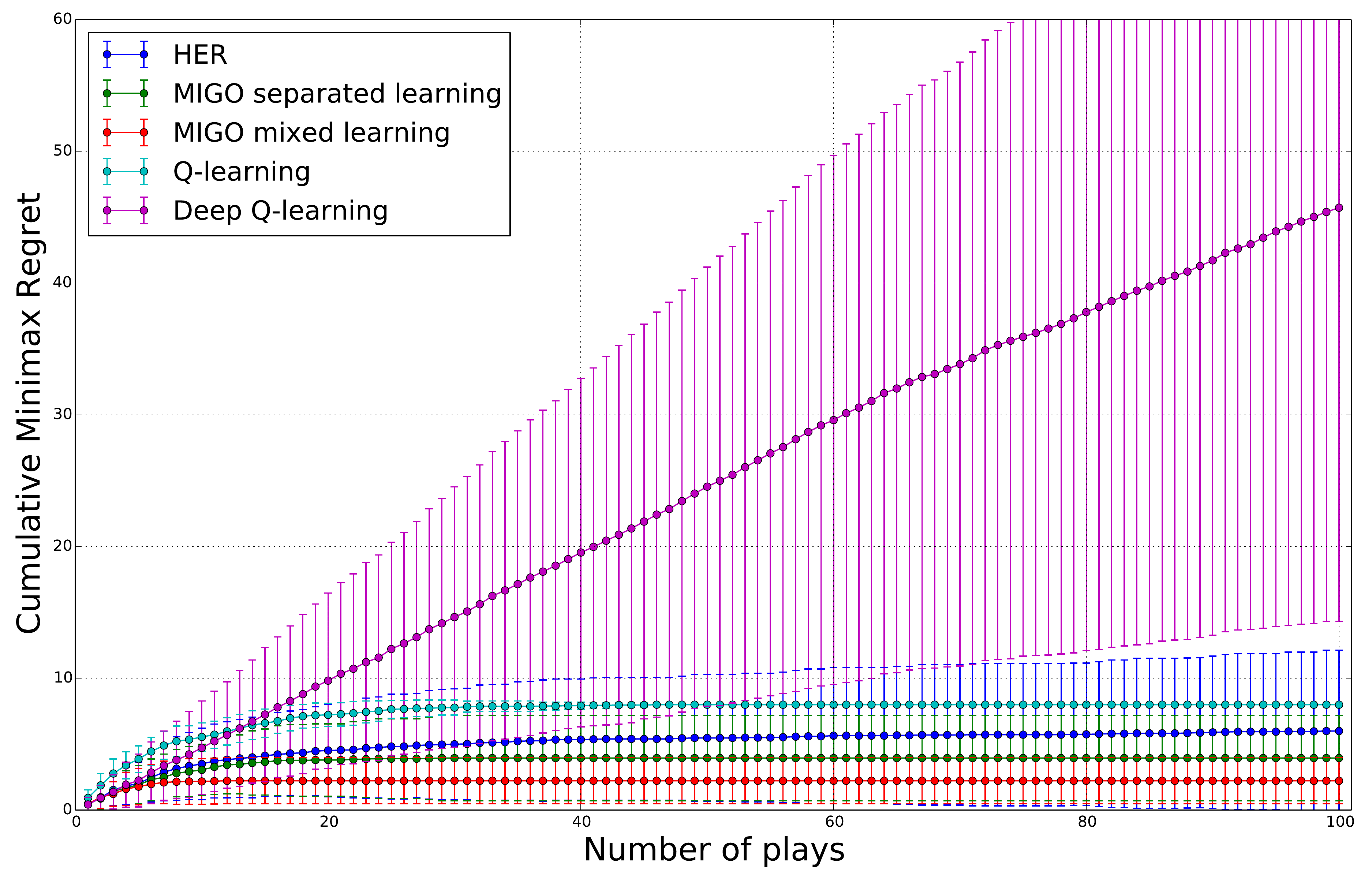}}}
    \vspace{-2mm}
    \caption{Cumulative Minimax Regret versus the number of iterations for Noughts-and-Crosses and Hexapawn$_{3}$}
   \label{fig:samplecomp}
   \vspace{-3mm}
\end{figure}
Results are presented in Figure \ref{fig:samplecomp} and show that \textit{MIGO} converges faster than MENACE / HER, Q-learning and Deep Q-learning for both games, refuting null hypothesis 1. As the maximum depth is larger for Noughts-and-Crosses than for Hexapawn$_{3}$, all systems require more iterations to converge. Deep Q-learning performs worst for Hexapawn$_{3}$ as the parameters selected are the ones tuned for Noughts and Crosses and might not be adapted. For both games, mixed learning has lower cumulative regret than separated  learning, because mixed learning does not waste any examples of \textit{draw/2} from the initial period in which \textit{win/2} is being learned and it does not stop learning \textit{win/2} after the initial period.

Rules learned by \textit{MIGO} are presented in Figure \ref{fig:rules}. \textit{MIGO} converges toward this full set of rules when playing Noughts-and-Crosses. Because the maximum depth of Hexapawn$_{3}$ is 2, \textit{MIGO} learns up to the double line when playing Hexapawn$_{3}$. If unfolding, the first rule can be translated into English as: \textit{State A is won at depth 1 if there exists a move from A to B such that B is won.} Similarly, winning at depth 2 can be described with the following statement: \textit{State A is won at depth 2 if there exists a move of the current player from A to B such that B is not immediately won for the opponent and such that the opponent cannot make a move from B to C to prevent the current player from immediately winning.} This statement is similar to the one presented in section 1. Finally, winning at depth 3 can be explained as: \emph{State A is won at depth 3 for the current player if there exists a move from A to B such that B is not won for the opponent in 1 or 2 moves and such that the opponent cannot make a move from B to C to prevent the current player from winning in 1 or 2 moves.} None of the other systems studied can provide similar explanation about the moves chosen. Rules are built on top on each other, the calling diagram in Figure \ref{diagram} represents the dependencies between each learned predicates.
\begin{table}
\centering
\begin{adjustbox}{width=0.45\textwidth}
\begin{tabular}{|c|c|}  
\toprule
Depth & Rule \\ \midrule
1 & \texttt{win\_1(A,B):-win\_1\_1\_1(A,B),won(B).} \\
& \texttt{win\_1\_1\_1(A,B):-move(A,B),won(B).}\\ \hline
& \texttt{draw\_1(A,B):-draw\_1\_1\_3(A,B),not(win\_1(B,C)).}\\
& \texttt{draw\_1\_1\_3(A,B):-move(A,B),not(win\_1(B,C)).}\\ \hline
2 & \texttt{win\_2(A,B):-win\_2\_1\_1(A,B),not(win\_2\_1\_1(B,C)).}\\
& \texttt{win\_2\_1\_1(A,B):-move(A,B),not(win\_1(B,C)).}\\ \hline
& \texttt{draw\_2(A,B):-draw\_2\_1\_1(A,B),not(win\_1(B,C)).}\\
& \texttt{draw\_2\_1\_1(A,B):-draw\_1(A,B),not(win\_1(B,C)).}\\ \hline \hline
3 & \texttt{win\_3(A,B):-win\_3\_1\_1(A,B),not(win\_3\_1\_1(B,C)).} \\
& \texttt{win\_3\_1\_1(A,B):-win\_2\_1\_1(A,B),not(win\_2(B,C)).} \\ \hline
& \texttt{draw\_3(A,B):-draw\_3\_1\_10(A,B),not(draw\_1\_1\_12(B,C))}.\\
& \texttt{draw\_3\_1\_10(A,B):-draw\_2(A,B),not(draw\_1\_1\_12(B,C))}.\\ \hline
4 & \texttt{draw\_4(A,B):-draw\_4\_1\_2(A,B),not(draw\_1\_1\_12(B,C))}.\\
& \texttt{draw\_4\_1\_2(A,B):-draw\_3(A,B),not(draw\_1\_1\_12(B,C))}.\\
\bottomrule
\end{tabular}
\end{adjustbox}
\captionsetup{justification=centering}
    \vspace{-1mm}
\caption{Example of rules learned for Noughts-and-Crosses (all) and Hexapawn$_{3}$ (above the double line)}
    \vspace{-4mm}
   \label{fig:rules}
\end{table}
\begin{figure}[t]
\captionsetup{justification=centering}
\centering
\subfloat{\includegraphics[width=0.35\textwidth]{./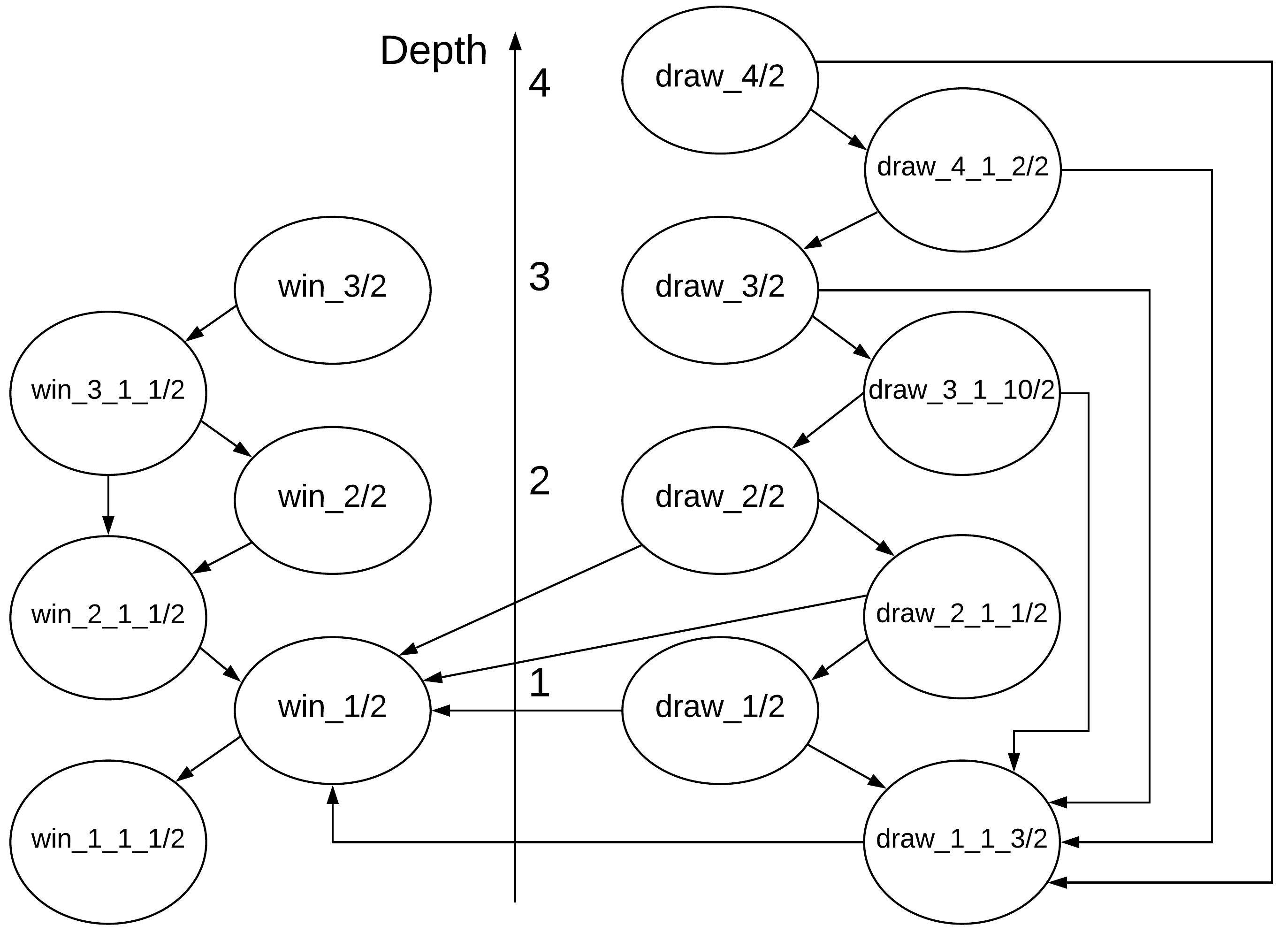}}
\captionsetup{justification=centering}
    \vspace{-1mm}
\caption{Calling diagram of Learned Strategies}
    \vspace{-4mm}
\label{diagram}
\end{figure}
\subsubsection{Discussion}
MENACE, HER and Q-learning encode the knowledge into the parameters (number of beads or Q-values). The states and their parameters are unique for each board. This results in a weaker generalisation ability: knowledge cannot be transferred from one state to another. Deep Q-learning can provide some generalisation ability; however, it is only visible after a large number of iterations. Conversely, \textit{MIGO} generalises the boards characteristics and each rule learned describes a set of states, which considerably reduces the number of parameters to learn and therefore the number of examples required.

The reinforcement learning systems tested have an implicit representation of the problem. For instance, no geometrical concepts have been encoded. Conversely, \textit{MIGO} benefits from a background knowledge which describes the notion of winning, and from which it can extract a notion of alignment. This allows a degree of explanation.

The running time increases rapidly with the state dimensions for \textit{MIGO}. This reflects the increasing execution time of the learned strategy which is not efficient since a deep evaluation requires extensive evaluation to decide whether a move leads to a win.

MENACE / HER are specifically tailored for theses games. Conversely, Q-learning and Deep Q-learning are a general approaches that can tackle a wide range of tasks, providing that parameters are tuned. \textit{MIGO} benefits from underlying assumptions which reduce its range of applications. However, the primitives are abstract enough to allow playing a wide range of games and support transferring knowledge from one game to another as we will demonstrate in the next section.

\subsection{Transferability}
\begin{figure}[t]
\centering
\captionsetup{justification=centering}
\subfloat[Hexapawn$_{3}$ to Noughts and Crosses]{\includegraphics[width=0.41\textwidth]{./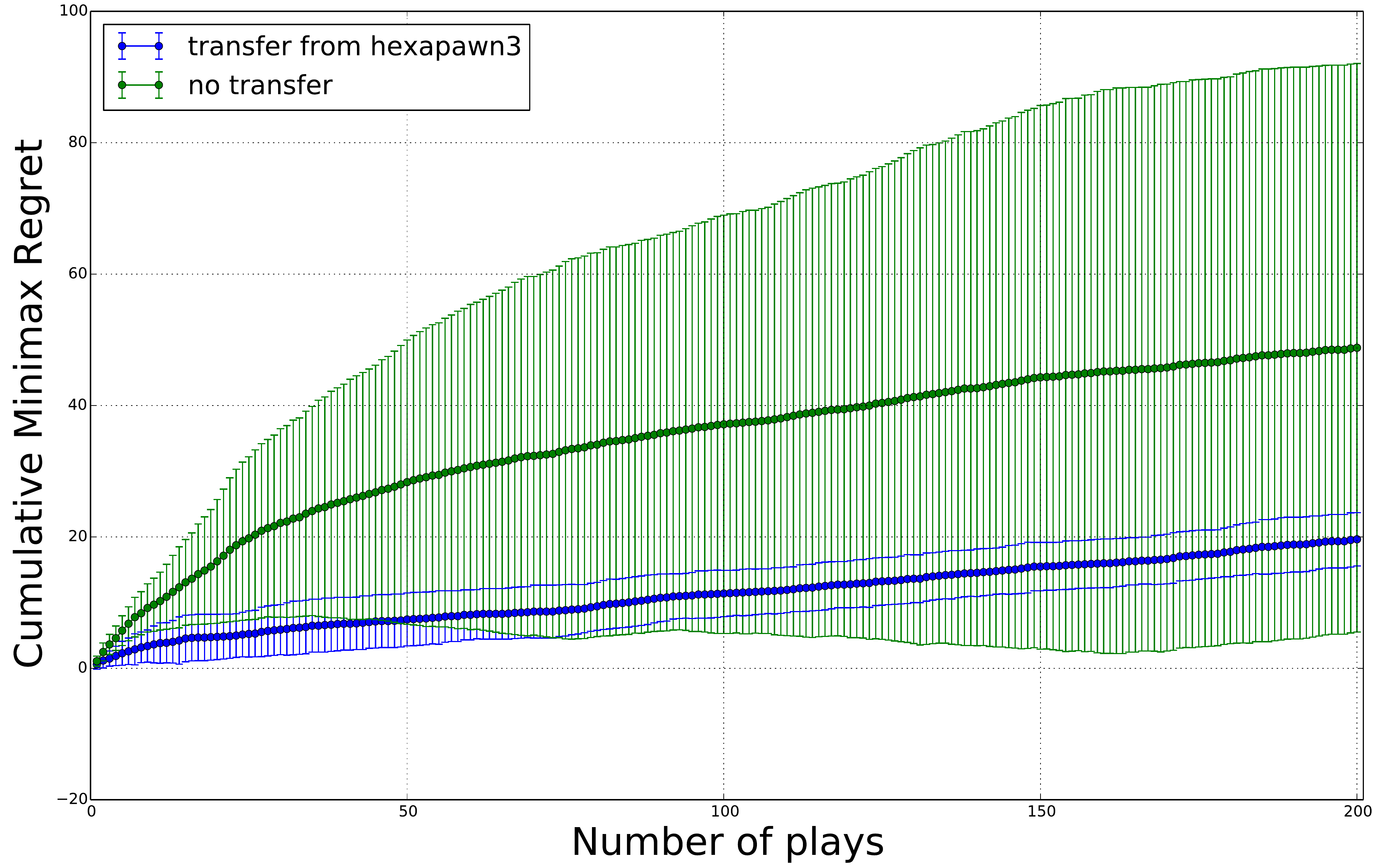}}\\
    \vspace{-2mm}
\subfloat[Noughts and Crosses to Hexapawn$_{4}$. Similar results are obtained from Hexapawn$_{3}$ to Hexapawn$_{4}$.]{\includegraphics[width=0.41\textwidth]{./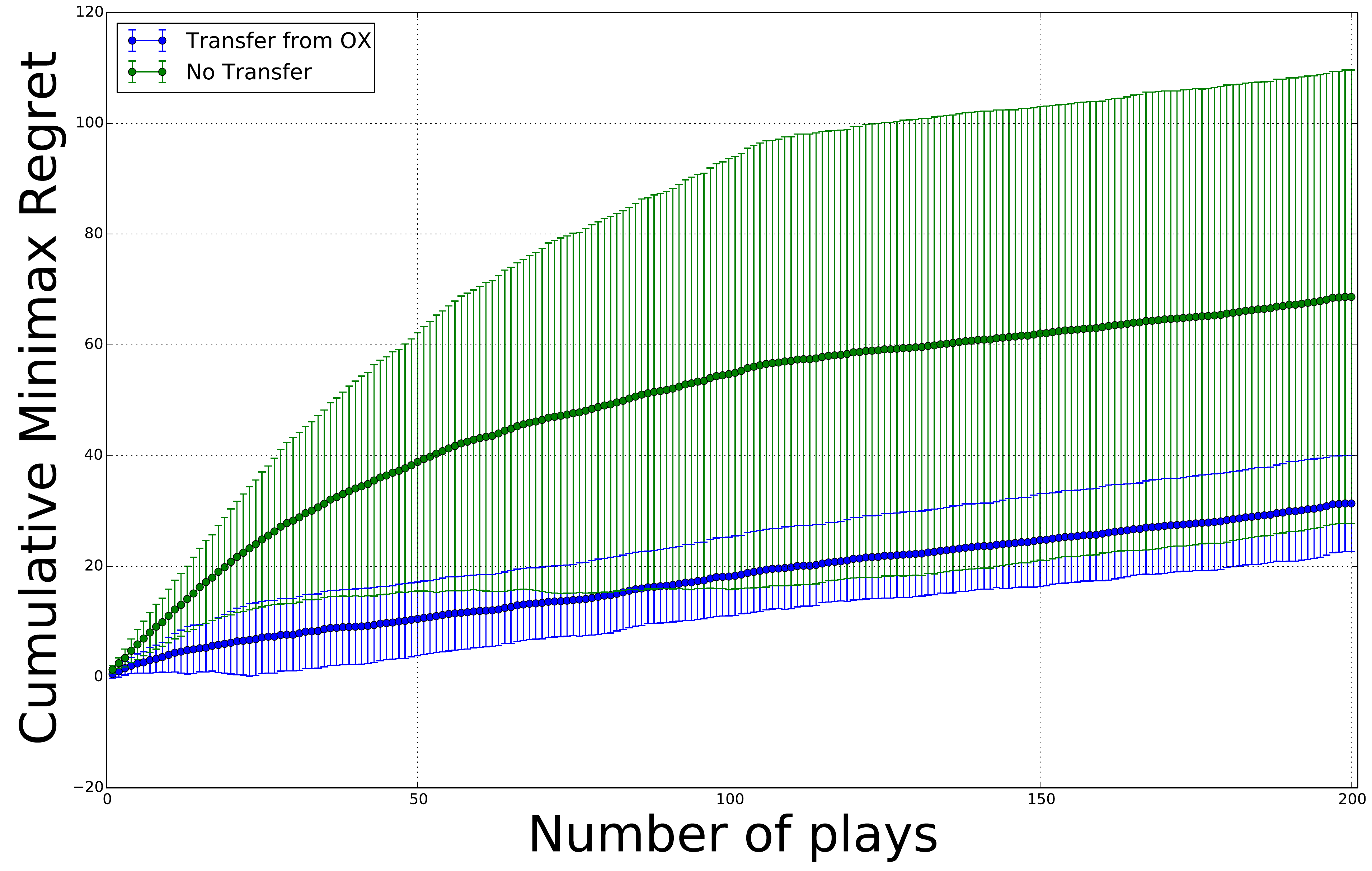}}
    \vspace{-2mm}
\caption{Transfer Learning}
\label{fig:transfer}
\vspace*{-4mm}
\end{figure}
\paragraph{Materials and Methods}
Strategies are first learned for Hexapawn$_{3}$ and Noughts-and-Crosses respectively. Strategies are learned with mixed learning and for 100 iterations for Hexapawn$_{3}$ and 200 iterations for Noughts and Crosses. The resulting learned program is transferred to the next learning task, which is learning a strategy for Hexapawn$_{4}$. Results have been averaged over 20 runs.
\paragraph{Results}
The results presented in Figure \ref{fig:transfer} show that transferring the knowledge learned in a previous task help to converge faster, thus refuting null hypothesis 2. Since the learner benefits from an initial knowledge, it is substantially improved compared to an initial random player.
\section{Conclusion and Future Work}
This article introduces a novel logical system named \textit{MIGO} for learning two-player-game strategies. It is based upon the MIL framework. This system distinguishes itself from classical reinforcement learning by the way that it addresses the Credit Assignment Problem. Our experiments have demonstrated that \textit{MIGO} achieves lower Cumulative Minimax Regret compared to Deep and classical Q-Learning. Moreover, we have demonstrated that strategies learned with \textit{MIGO} are transferable to more complex games. Strategies have also been shown to be relatively easy to comprehend.
\paragraph{Future Work}
One limitation of the system presented is the risk of over-generalisation, observable in the strategy learned. We will further extend the implementation to include a more thorough context for learning from positive examples such as the one presented in \cite{[Positive]}.

The running time suggests that the execution time of the learned strategies increases with the dimensions of the states, which limits scalability. We will further extend \textit{MIGO} to optimise the execution time for hypothesised programs. Selection of hypotheses could be performed following the idea described in \cite{[Metaopt]}.

Another limitation to scalability is the restriction imposed by the initial assumptions. The current version of \textit{MIGO} requires an optimal opponent, which is intractable in large dimensions. We will further extend this system by relaxing Theorems \ref{theorem1} and \ref{theorem2} and weakening the optimal opponent assumption. A solution could be to learn from self-play.

Because \textit{MIGO} benefits from a strong declarative bias, the sample complexity is much improved compared to other approaches. However, most of the examples are wasted as no labels could be attributed. We plan to evaluate whether Active Learning could further help to reduce the sample complexity. The learner could choose an initial board to start the game, the choice being based upon an information gain criterion.

Although learned strategies provide a certain form of explanation, we will further study how comprehensible learned strategies are. We will evaluate whether MIGO can fullfill Michie's Machine Learning Ultra Strong criterion, which requires the learner to be able to teach the learned hypothesis to a human \cite{UltraStrongML}.

Despite these limitations, we believe the novel system introduced in this work opens exciting new avenues for machine learning game strategies.

\bibliographystyle{named}
\bibliography{biblio}

\end{document}